\newtheorem{theorem}{Theorem}[section]
\newtheorem*{theorem*}{Theorem}
\newtheorem*{question*}{Question}
\newtheorem{lemma}[theorem]{Lemma}
\newtheorem*{lemma*}{Lemma}
\newtheorem{corollary*}{Corollary}
\newtheorem*{problem*}{Problem}
\theoremstyle{definition}
\newtheorem{definition}[theorem]{Definition}
\newtheorem*{remark*}{Remark}
\newcommand{\R}{\mathbb{R}}
\newcommand{\X}{\mathbf{X}}
\newcommand{\W}{\mathbf{W}}
\newcommand{\Var}{\text{Var}}
\DeclareMathOperator{\E}{\mathbb{E}}
\DeclarePairedDelimiter\parentheses{\lparen}{\rparen}
\newcommand{\Tr}[1]{\operatorname{Tr} \parentheses*{#1}}
\newcommand{\diag}[1]{\operatorname{diag} \parentheses*{#1}}
\newcommand{\ip}[1]{\left \langle #1 \right \rangle}
\title{Asymmetric Random Projections}
\date{}
\author{%
  Nick Ryder\thanks{This work was done while Nick Ryder was an intern at Amazon, Palo Alto.} \\
  Department of Mathematics\\
  University of California Berkeley, Berkeley, CA, USA\\
  \texttt{nick.ryder@berkeley.edu} \\
  Zohar Karnin \\
  Amazon, New York \\
  \texttt{zkarnin@amazon.com} \\
  Edo Liberty \\
  Amazon, Palo Alto \\
  \texttt{eliberty@amazon.com} \\
}
\begin{document}

\maketitle

\begin{abstract}
Random projections (RP) are a popular tool for reducing dimensionality while preserving local geometry. In many applications the data set to be projected is given to us in advance, yet the current RP techniques do not make use of information about the data. In this paper, we provide a computationally light way to extract statistics from the data that allows designing a data dependent RP with superior performance compared to data-oblivious RP. We tackle scenarios such as matrix multiplication and linear regression/classification in which we wish to estimate inner products between pairs of vectors from two possibly different sources. Our technique takes advantage of the difference between the sources and is provably superior to oblivious RPs. Additionally, we provide extensive experiments comparing RPs with our approach showing significant performance lifts in fast matrix multiplication, regression and classification problems.
\end{abstract}

\section{Introduction}
\label{sec:intro}

The use of random projections (RPs) as a method for data compression has been well studied for the past few decades. RPs provide a theoretical backing for oblivious compression techniques and are employed in multiple scenarios such as sketching \cite{alon1999space}, linear regression and other classification tasks \cite{fradkin2003experiments}, $k$-nearest neighbors \cite{andoni2006near}, fast approximate linear algebra \cite{clarkson2009numerical}, and more. One important property of RPs is that they are oblivious, meaning they can be determined without observing data. While this is useful for many applications, this restriction is unnecessary in several realistic scenarios.


In this paper we tackle the problem of obtaining a data-dependent random projection. We measure the quality of a random projection by its ability to preserve the inner product of a vector pair $x,w$. Specifically, we view a random projection as a tool to obtain a random estimate of $\ip{x,w}$ by taking the inner product of their projected version. Under the requirement of unbiasedness we aim to minimize variance, as common when dealing with estimators. An oblivious approach must handle the worst case scenario for $x,w$. However, with access to data we find ourselves in a setting where $x$ and $w$ are random vectors coming from some distribution. This occurs in matrix multiplication where $x,w$ are columns of two matrices. It also occurs in linear regression or classification where $x$ is a random data point and $w$ is a regressor chosen at random from some prior distribution.

With $x,w$ being random, the variance associated with the random projection is now over the randomness of both the projection itself, and of $x,w$. In what follows we analyze the \emph{optimal} linear pre-processing for $x$ and $w$ that improves the performance of an oblivious random projection applied to the processed versions. By choosing this black-box approach we  have the flexibility to use any of the well-known random projection methods, including sparse random projections~\cite{nelson2013osnap} or FJLT~\cite{ailon2009fast}, as long as they have associated to them JL-lemma guarantees~\cite{johnson1984extensions}. In addition to the optimal pre-processing transformation we provide a linear-time (in the input size) counterpart. We analyze its performance and show that it is never inferior and often strictly better than performing no pre-processing.

We apply our technique to applications of oblivious random projections. We show how our methods can be used for approximate fast matrix multiplication in a straightforward way. Another application of our methods is for linear regression and classification on high dimensional data. Our data-dependent random projection gives rise to a novel technique that includes the standard way of using oblivious random projections for classification/regression, yet can tune itself to the input data and improve the quality of the model, with negligible computation overhead. We empirically test our algorithm with an extensive set of experiments. For approximate matrix multiplication tasks we achieve a  reduction of 50\% to 60\% to the MSE with near-zero computational overhead compared with oblivious random projection, in multiple real datasets. For linear regression and classification, when compared to oblivious random projections we achieve a lift of 4\% in accuracy for binary classification on the {\bf RCV1} dataset, and a decrease of 61.2\% to MSE for the regression task in the {\bf Slice Localization} dataset.

%
%
%
  \vspace{-0.05in}

\section{Comparison with Previous Results}
  \vspace{-0.05in}

Random projections (RPs) have been originally proposed by Johnson and Lindenstrauss \cite{johnson1984extensions}. The projection they offer is linear, which in turn is useful for several applications such as sketching \cite{alon1999space}, linear regression and other classification tasks \cite{fradkin2003experiments}, $k$-nearest neighbors \cite{andoni2006near}, fast approximate linear algebra \cite{clarkson2009numerical}, and more. These projections were simplified and improved with time  \cite{dasgupta2003elementary,achlioptas2003database,ailon2009fast,nelson2013osnap}.



Applications of random projections typically use the fact that for a collection of input vectors, the collection of low dimensional vectors obtained via the random projection have the same pairwise distances, or inner products. 
For linear regression one can use the guarantee that $\ip{x,w^*}$ is preserved for every data point $x$ and the optimal (unknown) regressor $w^*$. This was used for developing a fast algorithm for linear regression \cite{fradkin2003experiments, maillard2012linear}. This property was used for kernel based linear classification in \cite{rahimi2008random} that choose random features from the kernel space, thereby removing the quadratic dependence on the example number in the run-time of SVM. Another application example is fast matrix multiplication. Given two matrices $X, W$, we can view their matrix product $X^\top W$ as encoding all the pairwise inner products of the columns of $X$ with the columns of $W$. From this perspective coming up with approximate fast matrix multiplication algorithms is equivalent to quickly estimating the inner products between these columns. One approach is to treat the columns as two separate data sets and compress them with the objective of minimizing the distortion of their inner products \cite{fmm-project, fmm-sketching}. 

The RP methods mentioned above are oblivious to the data. Other techniques provide data-dependent solutions with improved guarantees. A classic example is PCA, or CCA for when inner products are applied to different sets of vectors. These methods provide a deterministic guarantee, but come with a heavy computational cost; even the approximate version of PCA, CCA (see e.g.\ \cite{karnin2015online}) are never (quasi-)linear in the input dimension (as opposed to FJLT~\cite{ailon2009fast}). Additionally, the bias of the error may be problematic when the objective is not minimizing the \emph{mean} error. There are several other deterministic methods for dimensionality reductions, with different objectives than preserving inner products listed in the survey \cite{cunningham2015linear}.

Another data dependent approach consists of storing the norms of the original vectors in addition to their random projections. In \cite{li2006improving} the authors compute the MLE of the inner product or distance based on the RP approximation and the norm. In \cite{kang2017control} the norms are used to reduce the variance of the RP estimate. These methods are complementary to ours given that we modify the random projection rather than store additional information. One advantage of our technique is that it can be applied based on a sample of the data, rather than having a hard requirement of observing the entire data in advance. This is key in the application of linear regression / classification (Section~\ref{sec:linear}).  In \cite{cohen2015dimensionality} the authors provide non-oblivious random projections obtained by distorting the space according to the covariance matrix of the data. Specifically, they propose to multiply the data matrix with a random projection, then orthogonalize the result. The estimates of inner products are no longer unbiased, but the authors show that this non-oblivious projection provides better guarantees for $k$-means clustering, and $k$-rank approximation. The authors of \cite{sen2013informed} use a mixture of PCA and random projections to obtain a data dependent projection that can potentially have superior guarantees to oblivious random projections. 

  \vspace{-0.05in}

\section{Data Dependent Random Projections} \label{sec:ddrp}
  \vspace{-0.05in}

In what follows we consider the following setup. There are two distributions $\X, \W$ over vectors of dimension $d$ that are known to us, either completely or via oracle access to i.i.d. samples. We wish to estimate inner products of the form $\ip{x,w}$ where $x \sim \X, w \sim \W$. We do so via a linear dimension reduction operator that transforms $x$ and $w$ to dimension $k \ll d$ vectors $\tilde{x}, \tilde{w}$ in a way that $\ip{\tilde{x}, \tilde{w}} \approx \ip{x,w}$.

\vspace{-0.05in}
\subsection{Oblivious Random Projections}
\vspace{-0.05in}
Consider the case of an oblivious random projection. Here $R$ is a random $k \times d$ matrix and we have $\tilde{x} = Rx, \tilde{w} = Rw$. We consider the random variable $\ip{\tilde{x}, \tilde{w}}$ as an estimate to $\ip{x,w}$. In what follows we provide an asymmetric pre-processing step for $x$ and $w$ applied before the random projection. We analyze its guarantee for any random projection giving an unbiased estimate with a specific variance bound. This is formally defined here

\begin{definition}
A \emph{valid random projection} $R$ mapping $d$ dimensions into $k$ is one that for some constant $C$ independent of the data or its 
dimension, is such that 
$$ \E[\ip{Rx,Rw}] = \ip{x,w}, \ \ \ \ \Var[\ip{Rx,Rw}] \leq \frac{C \ip{x,w}^2 + \|x\|^2\|w\|^2}{k} $$
\end{definition}

As a sanity check we mention that the standard random projections are indeed \emph{valid random projections}. For completeness we prove this for $R$ obtained from i.i.d entries (the proof is deferred to Appendix~\ref{sec:proof_ddrp}). A similar statement can be made for other random projections yet is outside the scope of this paper.
\begin{lemma} \label{lem:iid_rp}
Let $R$ be a $k \times d$ matrix of i.i.d entries whose first 4 moments are $0,1/k,0,s_4$, with $s_4 \leq 3/k^2$. We have that $\ip{\tilde{w},\tilde{x}}$ is an unbiased estimator of $\ip{x,w}$ whose variance is at most $\frac{\ip{x,w}^2 + \|x\|^2\|w\|^2}{k}$.
\end{lemma}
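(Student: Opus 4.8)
The setup: $R$ has i.i.d. entries $R_{ij}$ with $\mathbb{E}[R_{ij}] = 0$, $\mathbb{E}[R_{ij}^2] = 1/k$, $\mathbb{E}[R_{ij}^3] = 0$, $\mathbb{E}[R_{ij}^4] = s_4 \le 3/k^2$. I want to show $\mathbb{E}[\langle Rx, Rw\rangle] = \langle x, w\rangle$ and $\mathrm{Var}[\langle Rx, Rw\rangle] \le (\langle x,w\rangle^2 + \|x\|^2\|w\|^2)/k$.

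Let me write out the approach. Expand $\langle Rx, Rw\rangle = \sum_{a=1}^k (Rx)_a (Rw)_a = \sum_{a=1}^k \big(\sum_i R_{ai} x_i\big)\big(\sum_j R_{aj} w_j\big) = \sum_{a,i,j} R_{ai} R_{aj} x_i w_j$.

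First I would compute the mean. Taking expectations term by term: $\mathbb{E}[R_{ai}R_{aj}]$ is $0$ when $i \ne j$ (independence and mean zero) and $1/k$ when $i = j$. Hence $\mathbb{E}[\langle Rx,Rw\rangle] = \sum_{a} \sum_i (1/k) x_i w_i = k \cdot (1/k) \langle x,w\rangle = \langle x,w\rangle$, giving unbiasedness.

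Next, the variance. I would write $\mathrm{Var}[\langle Rx,Rw\rangle] = \mathbb{E}[\langle Rx,Rw\rangle^2] - \langle x,w\rangle^2$. The square is a sum over two "rows" $a, b$ and four index pairs: $\sum_{a,b}\sum_{i,j,p,q} \mathbb{E}[R_{ai}R_{aj}R_{bp}R_{bq}]\, x_i w_j x_p w_q$. Because distinct rows are independent and mean-zero, any term with $a \ne b$ survives only if the $\{i,j\}$ indices are paired among themselves and likewise $\{p,q\}$ — i.e. $i=j$ and $p=q$ — contributing $\sum_{a \ne b}\sum_{i,p} (1/k)(1/k) x_i w_i x_p w_p = \frac{k(k-1)}{k^2}\langle x,w\rangle^2$. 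For $a = b$ I need the fourth-moment structure of a single row: $\mathbb{E}[R_{ai}R_{aj}R_{ap}R_{aq}]$ is nonzero only when the four indices are equal (value $s_4$) or split into two equal pairs (value $1/k^2$ for each of the three pairings $i{=}j,p{=}q$; $i{=}p,j{=}q$; $i{=}q,j{=}p$), being careful not to double-count the all-equal case. Summing the three pairing contributions gives $\frac{1}{k}\big(\langle x,w\rangle^2 + \|x\|^2\|w\|^2 + \langle x,w\rangle^2\big)$ minus corrections on the diagonal $i=j=p=q$, where the true coefficient is $s_4$ rather than $3/k^2$; since $s_4 \le 3/k^2$ this correction is $\le 0$, so it only helps the bound. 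Collecting everything: the $a\ne b$ piece plus the $a=b$ piece gives $\frac{k-1}{k}\langle x,w\rangle^2 + \frac{1}{k}\big(2\langle x,w\rangle^2 + \|x\|^2\|w\|^2\big) + (\text{nonpositive})$, and subtracting $\langle x,w\rangle^2$ leaves at most $\frac{\langle x,w\rangle^2 + \|x\|^2\|w\|^2}{k}$.

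The main obstacle is purely bookkeeping: correctly enumerating which of the index-coincidence patterns contribute at fourth order in a single row and handling the overlap of the "all four equal" case with the three "two-pairs" cases, so that the fourth-moment assumption $s_4 \le 3/k^2$ is used with the correct sign. Everything else is a mechanical expansion, so I would state the two index-pattern lemmas (cross-row and within-row) cleanly and then just add up the surviving terms.
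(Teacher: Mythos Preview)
Your proposal is correct and matches the paper's approach: a direct moment expansion in which the fourth-order index-pattern analysis, together with $s_4 \le 3/k^2$, yields the bound. The only difference is organizational---the paper first reduces to the case $k=1$ (a single row $r$ with second moment $1$ and fourth moment $\le 3$) and then divides the variance by $k$ via i.i.d.\ averaging, whereas you keep all $k$ rows throughout and handle the cross-row ($a\ne b$) terms explicitly; the within-row computation is identical.
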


\vspace{-0.1in}
\subsection{Our Solution}
\vspace{-0.05in}
Our technique follows from a simple observation. Consider an invertible matrix $A$. We choose a different projection for the vector $x$ and $w$. Specifically, we set 
$$ \tilde{x} = RAx, \ \ \ \tilde{w} = RA^{-\top}w $$
where $A^{-\top} = (A^{-1})^\top$ is the inverse transpose of $A$. For these estimate it is easy to observe that $\ip{\tilde{x},\tilde{w}}$ remains an unbiased
estimate of $x,w$ since $\ip{Ax, A^{-\top}w}=\ip{x,w}$. However, when we use the variance bound for valid random projections we get 
$$\Var(x,w) \leq C \ip{x,w}^2 + \|Ax\|^2\|A^{-\top}w\|^2$$
meaning we replaced the term $\|x\|^2\|w\|^2$ with $\|Ax\|^2 \|A^{-\top}w\|^2$. Notice that unless $x$ and $w$ have very close directions the $\|x\|^2\|w\|^2$
term is the dominant one in the variance bound.  Now, since our vectors $x,w$ are drawn from known distributions we can consider a matrix $A$ that minimizes that
quantity when averaged over the possible draws. Specifically, we aim to minimize the function
$$\Phi(A) = \E_{x,w}\left[\|Ax\|^2\|A^{-\top}w\|^2\right]$$
\vspace{-0.02in}
It turns out that $\Phi$ can be efficiently minimized by applying the technique of CCA (Canonical Correlation Analysis) on the covariance matrices.

\begin{theorem} \label{thm:full}
Let $\X,\W$ be independent distributions over $\R^d$ with second moments $\Sigma_X=\E_x[xx^\top], \Sigma_W=\E_w[ww^\top]$. If we decompose 
$$ \Sigma_X = Q_X^\top Q_X, \ \Sigma_W = Q_W^\top Q_W, Q_XQ_W^\top = UDV^\top $$
with $UDV^\top$ the singular value decomposition of $Q_XQ_W^\top $, the minimizer of $\Phi(A)$ is 
$$  A^* = D^{1/2} U^\top Q_X^{-\top}$$

Letting $\sigma_X, \sigma_W$ be the vectors of the square roots of the eigenvalues of $\Sigma_X, \Sigma_W$. We have
$$ \Phi(I) = \Tr{\Sigma_X} \Tr{\Sigma_W} = \|\sigma_X\|^2 \|\sigma_W\|^2, \Phi(A^*) = \ip{\sigma_X, \sigma_W}^2 $$
\end{theorem}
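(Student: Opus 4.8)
The plan is to use independence to collapse $\Phi$ into a product of two traces that depends on $A$ only through $M:=A^\top A$, and then reduce the resulting matrix optimization, by a change of coordinates, to a scalar Cauchy--Schwarz problem.

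\emph{Factorization.} Because $x$ and $w$ are independent, $\Phi(A)=\E_x[\|Ax\|^2]\,\E_w[\|A^{-\top}w\|^2]$. Writing $\|Ax\|^2=\Tr{A^\top A\,xx^\top}$ and $\|A^{-\top}w\|^2=\Tr{(A^\top A)^{-1}ww^\top}$ and taking expectations gives $\Phi(A)=\Tr{M\Sigma_X}\cdot\Tr{M^{-1}\Sigma_W}$ with $M=A^\top A$. Since $A\mapsto A^\top A$ maps the invertible matrices onto the positive-definite cone (and every $M\succ0$ has an invertible square root), it suffices to minimize $f(M)=\Tr{M\Sigma_X}\,\Tr{M^{-1}\Sigma_W}$ over $M\succ0$.

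\emph{Diagonalizing the interaction.} Substituting $N=Q_XMQ_X^\top$ (still ranging over all $N\succ0$), cyclicity of the trace gives $\Tr{M\Sigma_X}=\Tr N$ and $\Tr{M^{-1}\Sigma_W}=\Tr{N^{-1}PP^\top}$ with $P=Q_XQ_W^\top=UDV^\top$. As $PP^\top=UD^2U^\top$, a second substitution $L=U^\top N U\succ0$ turns $f$ into $g(L)=\Tr L\cdot\Tr{L^{-1}D^2}$, where $D=\diag{d_1,\dots,d_d}$ lists the singular values of $Q_XQ_W^\top$ (all positive, since $Q_X,Q_W$ are invertible).

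\emph{Scalar minimization and read-off of $A^*$.} The crux is that $g$ is minimized by a diagonal $L$. For any $L\succ0$ one has $(L^{-1})_{ii}\ge 1/L_{ii}$ (Cauchy--Schwarz applied to $L^{1/2}e_i$ and $L^{-1/2}e_i$), hence $g(L)\ge\big(\sum_i L_{ii}\big)\big(\sum_i d_i^2/L_{ii}\big)\ge\big(\sum_i d_i\big)^2=(\Tr D)^2$ by Cauchy--Schwarz once more, with equality at $L=D$. Tracing $L=D$ back through the substitutions yields $M=Q_X^{-1}UDU^\top Q_X^{-\top}$, of which $A^*=D^{1/2}U^\top Q_X^{-\top}$ is one square root (any other minimizer differs by an orthogonal prefactor and an overall scalar). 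Plugging $M=I$ into the factored form gives $\Phi(I)=\Tr{\Sigma_X}\Tr{\Sigma_W}$, and $\Tr{\Sigma_X}=\sum_i\lambda_i(\Sigma_X)=\|\sigma_X\|^2$ (likewise for $W$), while $\Phi(A^*)=(\Tr D)^2$, which one rewrites as $\ip{\sigma_X,\sigma_W}^2$ by expressing the sum of singular values of $Q_XQ_W^\top$ through the eigenvalues of $\Sigma_X$ and $\Sigma_W$.

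The main obstacle is the second and third steps: choosing the change of variables that reduces the coupled matrix problem to the clean form $\Tr L\cdot\Tr{L^{-1}D^2}$, and then showing the off-diagonal freedom in $L$ can only increase the objective — the inequality $(L^{-1})_{ii}\ge 1/L_{ii}$ is exactly what makes the latter precise. The factorization, the trace identities, and substituting $L=D$ back are all routine.
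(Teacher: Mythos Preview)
Your argument is correct and takes a genuinely different route from the paper's. Both proofs start with the same factorization $\Phi(A)=\Tr{A^\top A\,\Sigma_X}\,\Tr{(A^\top A)^{-1}\Sigma_W}$, but diverge from there. The paper rewrites this as $\|AQ_X^\top\|_F^2\,\|A^{-\top}Q_W^\top\|_F^2$ and applies Cauchy--Schwarz directly to the Frobenius inner product $\ip{AQ_X^\top,\,A^{-\top}Q_W^\top}_F=\Tr{Q_XQ_W^\top}$; since the left side is $A$-free, any triple $(Q_X,Q_W,A)$ achieving equality exhibits the minimizer. It then invokes a CCA lemma to produce such an $A$, and massages the factorization $Q_W$ so that $Q_XQ_W^\top$ becomes symmetric PSD, making the trace equal the sum of singular values. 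Your approach instead parametrizes by $M=A^\top A$, performs two changes of variable ($N=Q_XMQ_X^\top$, then $L=U^\top NU$) to reduce to $g(L)=\Tr{L}\cdot\Tr{L^{-1}D^2}$, and finishes with two \emph{scalar} Cauchy--Schwarz steps: $(L^{-1})_{ii}\ge 1/L_{ii}$ kills the off-diagonal freedom, and the AM--GM--type bound handles the diagonal. This is more self-contained --- no auxiliary CCA lemma, no need to adjust $Q_W$ --- and makes the structure of the minimizer transparent. The paper's version is shorter and more conceptual (one matrix Cauchy--Schwarz does everything), but relies on an external lemma and a slightly delicate choice of factorization.

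One shared soft spot: your last sentence, ``which one rewrites as $\ip{\sigma_X,\sigma_W}^2$ by expressing the sum of singular values of $Q_XQ_W^\top$ through the eigenvalues of $\Sigma_X$ and $\Sigma_W$,'' is no more justified than the paper's own treatment of the same identification; in both cases the passage from $(\Tr D)^2$ to $\ip{\sigma_X,\sigma_W}^2$ is asserted rather than computed. Your derivation of $\Phi(A^*)=(\Tr D)^2$ is clean and correct; the remaining rewrite is just matching the theorem's notation.
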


The above theorem provides the optimal solution to the problem of minimizing $\Phi(A)$ but its computational cost may be too steep. Although it is solvable in polynomial time, or even near bi-linear time, in the input and output dimension, in an approximate version, we could have a scenario where the input dimension is quite large and we cannot afford to have multiple passes over it. For this scenario we provide an alternative technique that does not achieve the global minimum of $\Phi$ but admits a much simpler solution, and has guarantees that in many settings are sufficiently good. The idea in high level is to ignore all off-diagonal values of $\Sigma_X, \Sigma_W$ and solve the problem assuming they are zero. A very similar idea has proven itself in the field of optimization \cite{duchi2011adaptive}, where the expensive step of normalizing via the covariance matrix is replaced with the analog step w.r.t the diagonal. Collecting those stats can easily be done using a single pass and the decomposition becomes trivial.

\begin{theorem} \label{thm:fast}
Let $\X,\W$ be distributions over $\R^d$ with second moments $\Sigma_X=\E_x[xx^\top], \Sigma_W=\E_w[ww^\top]$. If we restrict to diagonal matrices to preprocess, then we can minimize $\Phi$ with the following: Let $d_X,  d_W$ element-wise square root of the diagonals of $\Sigma_X, \Sigma_W$ and let $\hat{A}$ be the diagonal matrix whose $i$'th entry is\footnote{If the diagonal has a zero value, it means we can ignore that entry in the original data. Hence, we assume w.l.o.g that all diagonals are strictly positive} $d_X(i)^{-1/2} d_W(i)^{1/2}$. It holds that
$$ \Phi(I) = \Tr{\Sigma_X} \Tr{\Sigma_W} =\|d_X\|^2 \|d_W\|^2, \Phi(\hat{A}) = \ip{d_X, d_W}^2 $$
\end{theorem}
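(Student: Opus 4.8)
The plan is to reduce the optimization of $\Phi$ over diagonal matrices to a coordinate-wise estimate that is settled by a single application of Cauchy--Schwarz. First I would note that a diagonal matrix $A=\diag{a_1,\dots,a_d}$ is symmetric, so $A^{-\top}=A^{-1}=\diag{a_1^{-1},\dots,a_d^{-1}}$, and the footnote's reduction lets me assume $a_i\neq 0$ for all $i$ so that $A$ is invertible. Then $\|Ax\|^2=\sum_i a_i^2 x_i^2$ and $\|A^{-\top}w\|^2=\sum_j a_j^{-2} w_j^2$, and (using independence of $\X$ and $\W$, exactly as in Theorem~\ref{thm:full}) the expectation factorizes as
$$\Phi(A)=\E_x\!\left[\|Ax\|^2\right]\,\E_w\!\left[\|A^{-1}w\|^2\right]=\left(\sum_i a_i^2\, d_X(i)^2\right)\!\left(\sum_j a_j^{-2}\, d_W(j)^2\right),$$
where I have used $d_X(i)^2=(\Sigma_X)_{ii}=\E[x_i^2]$ and likewise $d_W(j)^2=(\Sigma_W)_{jj}$.

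Next, substituting $t_i:=a_i^2>0$, minimizing $\Phi$ over diagonal $A$ is the same as minimizing $f(t)=\big(\sum_i t_i d_X(i)^2\big)\big(\sum_j t_j^{-1} d_W(j)^2\big)$ over $t\in\R_{>0}^d$. Applying Cauchy--Schwarz to the vectors with coordinates $\sqrt{t_i}\,d_X(i)$ and $d_W(i)/\sqrt{t_i}$ gives $f(t)\ge\big(\sum_i d_X(i)\,d_W(i)\big)^2=\ip{d_X,d_W}^2$, with equality precisely when those vectors are parallel, i.e.\ $t_i d_X(i)^2=\lambda\, t_i^{-1} d_W(i)^2$ for a common $\lambda>0$, equivalently $t_i\propto d_W(i)/d_X(i)$. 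Choosing the proportionality constant to be $1$ yields $a_i=\sqrt{t_i}=d_X(i)^{-1/2}d_W(i)^{1/2}$, which is exactly $\hat A$; hence $\hat A$ minimizes $\Phi$ among diagonal matrices and $\Phi(\hat A)=\ip{d_X,d_W}^2$. Finally, evaluating at $A=I$ (all $a_i=1$) gives $\Phi(I)=\big(\sum_i d_X(i)^2\big)\big(\sum_j d_W(j)^2\big)=\|d_X\|^2\|d_W\|^2$, and since $\sum_i d_X(i)^2=\sum_i(\Sigma_X)_{ii}=\Tr{\Sigma_X}$ and likewise for $W$, this equals $\Tr{\Sigma_X}\Tr{\Sigma_W}$.

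I do not expect a genuine obstacle: once the expectation is factored, the whole statement is one line of Cauchy--Schwarz plus its equality case. The only points requiring care are the invertibility caveat (handled by the footnote's reduction to strictly positive diagonals) and the factorization step, which does use that $\X$ and $\W$ are independent --- without that hypothesis one only has $\Phi(A)=\sum_{i,j}a_i^2 a_j^{-2}\,\E[x_i^2 w_j^2]$, which need not split, and in particular $\Phi(I)$ would differ from $\Tr{\Sigma_X}\Tr{\Sigma_W}$ by the cross-covariance terms $\sum_{i,j}\mathrm{Cov}(x_i^2,w_j^2)$.
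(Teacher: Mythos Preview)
Your argument is correct. The paper's proof proceeds slightly differently: it observes that for diagonal $A$ one has $\Tr{A^\top A\,\Sigma_X}=\Tr{A^\top A\,\diag{\Sigma_X}}$ (and similarly for $\Sigma_W$), so restricting to diagonal preprocessing is the same as replacing $\Sigma_X,\Sigma_W$ by their diagonals and then invoking the full Theorem~\ref{thm:full}. In other words, the paper reduces the diagonal case to the general CCA-based result, whereas you unwind that result directly in coordinates via a scalar Cauchy--Schwarz and its equality case. Both routes are ultimately the same Cauchy--Schwarz inequality; yours is more self-contained (no SVD/CCA needed), while the paper's makes the structural point that ``diagonal $A$'' is literally the general problem with diagonalized covariances. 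Your remark that the factorization of $\Phi(A)$ uses independence of $\X$ and $\W$ is apt: the statement of Theorem~\ref{thm:fast} omits this hypothesis, but the paper's proof inherits it from Theorem~\ref{thm:full}.
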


The above theorem, coupled with the Cauchy-Schwartz inequality shows that the diagonal approach of $\hat{A}$ can only be better than taking the identity matrix, i.e.\ using an oblivious random projection. Although a pathological case can be constructed in which $\Phi(A^*) \ll \Phi(\hat{A}) = \Phi(I)$, in the sections below we experiment with real data and see that this is rarely the case, meaning that there is a significant gap between $\Phi(I)$ and $\Phi(\hat{A})$. The proofs of Theorems~\ref{thm:full} and~\ref{thm:fast} are given in Appendix~\ref{sec:proof_ddrp}


%
%
%
\vspace{-0.05in}
\section{Applications}
\vspace{-0.05in}

In this section we show how the developed tools can be used to speed up approximate matrix multiplications and improve the quality of linear regression or classification.

\vspace{-0.05in}
\subsection{Fast Matrix Multiplication} \label{sec:FMM}
\vspace{-0.05in}
One natural application in which we want to compress data from two different distributions arises in fast matrix multiplication (FMM). In this context, given two matrices $X, W$, we have the $i,j$th entry of $X^\top W$ is $\ip{x_i, w_j}$ where $x_i$ is the $i$th column of $X$ and $w_j$ the $j$th column of $W$. It follows that in order to compress the matrices for FMM it is sensible to compress their columns. We get the following simple re-scaling algorithm for FMM presented in Algorithm~\ref{alg:fmm}. Despite the simplicity of this variance scaling trick, in practice we see notably decreases in mean squared error from unscaled random projections on a variety of datasets. Details are in \S\ref{sec:experiments}.

\begin{algorithm}[tb]
  \caption{Fast Variance Scaling FMM} \label{alg:fmm}

\begin{algorithmic}
 \STATE {\bfseries Input:} Two matrice $X$, $W$
 \STATE $D_X$ diagonal matrix with $(D_X)_{j,j} \gets \E_i \diag{ X_{i,j}^2 }$\;
 \STATE $D_W$ diagonal matrix with $(D_W)_{j,j} \gets \E_i \diag{ W_{i,j}^2 }$\;
 \STATE $\tilde{X} \gets D_X^{-1/4}D_W^{1/4}X$\;
 \STATE $\tilde{W} \gets D_W^{-1/4}D_X^{1/4}W$\;
 \STATE With Random Projection $P$, project the columns of $\tilde{X}, \tilde{W}$ to $P \tilde{X}, P \tilde{W}$\;
 \STATE \textbf{Output: } $\tilde{X}^\top P^\top P \tilde{W}$\;
    \end{algorithmic}

 \end{algorithm}

\subsection{Linear regression and classification} \label{sec:linear}

Commonly in linear learning, either for regression or classification, the input dimension is quite large, possibly larger than the number of examples. A common approach for handling such cases, in order to mitigate both the danger of over-fitting and the large run-time, is to apply a random projection to the data and solve the regression problem on the lower dimension projected data. We note that these techniques are somewhat different than regularization based techniques, or methods aiming to find a sparse regressor. The advantage of this method has been established in previous works, and in a nutshell, comes from both having to learn a small number of parameters to begin with, hence obtaining faster run-time, and dealing with settings where the regressor is not necessarily sparse. 

The analysis of this approach follows from observing that for a random projection $R$, the optimal regressor $w^*$ and any data point $x$ we have 
$$ \ip{w^*,x} \approx \ip{Rw^*, Rx} .$$
It follows that by solving the problem on the projected data, our loss is upper bounded by the loss of $Rw^*$, which in turn is bounded due to the approximation guarantees of the random projection.

We cannot apply the asymmetric approach naively as we do not have access to the distribution of $w^*$. That being said, in most solutions to the problem one typically assumes an isotropic prior (translating to Ridge regression), meaning that $\Sigma_W=\lambda I$ for some scalar $\lambda$. Taking this approach exactly dictates that we pre-process the inputs $x$ by multiplying them by $\Sigma_X^{-1/4}$, or taking the more practical approach, by the diagonal matrix $D_X^{-1/4}$ where $(D_X)_{i,i}$ is the expected value of $x_i^2$. 

This approach however, depends too heavily on the prior assumption of $w^*$ that may not be correct. Taking this in to account we consider a more flexible approach by adding a hyper-parameter $\lambda$ and performing a pre-processing step of multiplying the data by $D_X^{\lambda}$. Setting $\lambda=-0.25$ recovers the above approach, and $\lambda=0$ recovers the approach of oblivious random projections. For the optimization procedure, it is possible to treat $\lambda$ as a hyper parameter and use a solver for the linear learning problem. Another option is to use any gradient based solver on the joint space of the low dimensional regressor and $\lambda$. Specifically, we draw a fixed random projection $R:\R^d \to \R^k$ mapping the input of dimension $d$ into a $k$-dimensional space with $k \ll d$, then minimize
$$ \min_{w,\lambda} \sum_{i=1}^n L(w^\top R\cdot D_X^{\lambda} \cdot x_i, y_i) $$
Here $x_i,y_i$ are the $i$'th datapoint and label, $R,D_X$ are fixed as detailed above, $w\in \R^k, \lambda \in \R$ are the parameters to be optimized, and $L$ is the loss function (e.g. logistic loss). 



Our experiments in section~\ref{sec:experiments} show that a suitable value for this $\lambda$ parameter can significantly improve the performance of regression and classification tasks. Prior to this work, we are aware of two commonly used values of $\lambda$. Oblivious random projections correspond to fixing $\lambda=0$. Applying the random projection on normalized data corresponds to setting $\lambda=-0.5$. Our experiments show that often, a third, different value for $\lambda$ is far better than these two options, demonstrating the effectiveness of this approach.

\section{Experiments}\label{sec:experiments}
We proceed to experiments with real and synthetic data. In all of our experiments, in order to reduce the noise coming from the randomness of random projections, we calculate empirical mean of 100 trials. Throughout we use a random projection matrix of i.i.d.\ signs \cite{achlioptas2003database}. 

\subsection{Fast Matrix Multiplication}

For FMM we created a collection of $X,W$ pairs obtained either from synthetic data or real world public datasets. Due to space restrictions we defer the experiments on synthetic data to Appendix~\ref{app:more experiments}.

For real data matrices we consider two dense and two sparse dataset obtained from UCI Machine Learning Repository \cite{uci-arcene, uci-isolet, uci-repeat}. The first dense dataset, {\bf ARCENE}, was obtained by merging three mass-spectrometry datasets which indicate the abundance of proteins in human sera having a given mass value. This dataset has 10000 features and 700 data points. The second dense dataset, {\bf Isolet}, consists of 1559 data points with 616 features. In this dataset, 150 subjects spoke the name of each letter of the alphabet twice. The features include spectral coefficients; contour features, sonorant features, pre-sonorant features, and post-sonorant features. 
The first sparse dataset, {\bf TW\_OC}, is a dataset consisting of tweets with geolocation from Orange County CA area. The data coordinates correspond to the number of times a user visits a certain place and has 5000 data points with 11346 features. The second dataset, {\bf Go\_SF}, is similarly formatted and consists of check-ins from the app Gowalla, from the San Francisco area. It has 2593 data points with 7706 features.

We convert each of these four datasets into two $X,W$ pairs by splitting their feature sets in half. Thus if we have $n$ datapoints and $d$  features we end up with two matrices of size $n \times d/2$. We then either compare $X, W$ or $X^\top, W^\top$. This corresponds to either compressing to compare the inner products of the data points represented by disjoint features, or to comparing half of the feature vectors against the other half. We refer to the these as {\bf data} and {\bf feature}. We end up with eight matrix pair corresponding to each (dataset, data/feature) tuple. An example to make things concrete: The {\bf ARCENE-data} matrix pair refers to the $X,W$ pair (rather than $X^\top,W^\top$) obtained from the {\bf ARCENE} dataset.


For each of the $X,W$ pairs we compute the exact product $XW^\top$, and 3 approximate matrix products $(XR)(WR)^\top$ corresponding to three random projections $R$. The first, {\bf oblivious} is an oblivious random projection with i.i.d.\ signs. The second {\bf quick}, contains a pre-processing component based only on the diagonal entries of the covariance of $X$ and $W$.   The third, {\bf optimal}, contains the optimal pre-processing component based on CCA. 

For each approximate matrix product we compute the Squared Error, namely $\|XW^\top - (XR)(WR)^\top\|_F^2$ for multiple values of target dimension. We repeated the experiment 100 times to account for the noise coming from the randomness of $R$.

\paragraph{Dense Data}
For the dense data, we proceed as detailed with the two sets \textbf{ARCENE} and \textbf{Isolet} and corresponding four matrix pairs. Both in the case of {\bf ARCENE-feature}, and {\bf Isolet-feature} all 3 methods are nearly identical, hence we do not report the exact numbers of the experiment. This occurs since the corresponding covariance matrices are very similar. For {\bf ARCENE-data} and {\bf Isolet-data} the 3 methods provide different results. For the \textbf{ARCENE-data} pair, quick random projections yield a 2.33 times decrease in MSE, while optimal projections yield a 157.9x decrease in MSE. For the \textbf{Isolet-data} pair, quick random projections yields a 1.123 times decrease in MSE, and optimal projections yield 2.47 times decrease in MSE. The plot of MSE compared to target dimension is given in Figure~\ref{fig:fmm_biological}. For better visibility, the $y$-axis is the log of the MSE. The dotted lines represent one standard deviation. Recalling that the numbers reported are the mean of 100 trials, we expect a Guassian-like distribution of the measurement error.


\begin{figure}[ht]
  \vspace{-0.1in}
\begin{tabular}{cc}
\includegraphics[width=.4\linewidth]{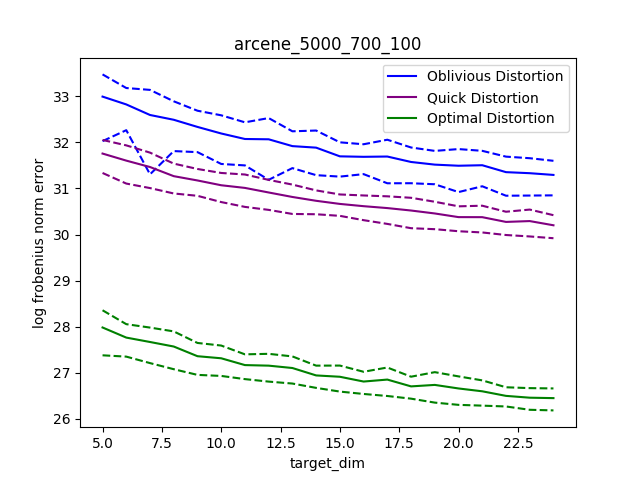}  &
  \includegraphics[width=.4\linewidth]{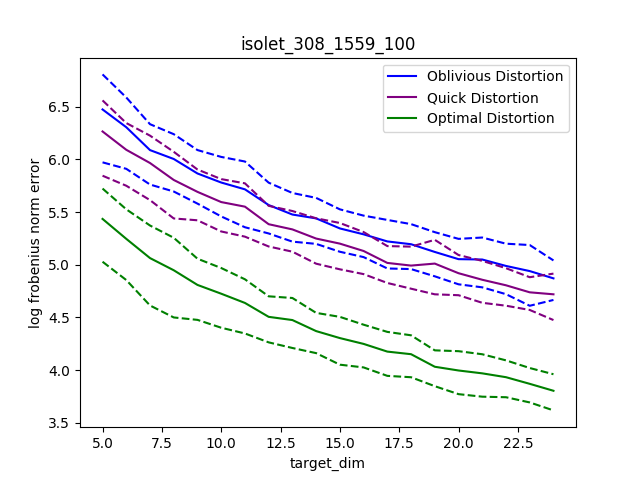}\\
  \end{tabular}
    \vspace{-0.1in}
  \caption{Dense Data FMM. Left \textbf{ARCENE-data}. Right \textbf{Isolet-data}. X-axis is the target dimension, Y-axis is the log MSE, dotted line represent lower and upper confidence bounds according to a single standard deviation.
  }

  \label{fig:fmm_biological}
\end{figure}

\paragraph{Sparse Data} 
For sparse datasets we see a significant advantage to our methods in both the {\bf data} and {\bf feature} matrix pairs. For {\bf Go\_SF-data}, quick projections yield 2\% of the MSE of oblivious projections, while optimal projections yield .9\% of the MSE of oblivious projections. For {\bf Go\_SF-feature}, quick projections yield 50.1\% of the MSE of oblivious projections and optimal projections give 41.4\% of oblivious projections. For both {\bf TW\_OC-data} and {\bf TW\_OC-feature} the quick and optimal distortions are nearly indistinguishable. For  {\bf TW\_OC-data} we get .08\% of the MSE of oblivious projections and for  {\bf TW\_OC-feature} the MSE is 2.2\% of the MSE of oblivious projections. The plots are given in Figure~\ref{fig:fmm_sparse} in an analog format to Figure~\ref{fig:fmm_biological}.


\begin{figure}[ht]

\begin{tabular}{cc}
  \includegraphics[width=.4\linewidth]{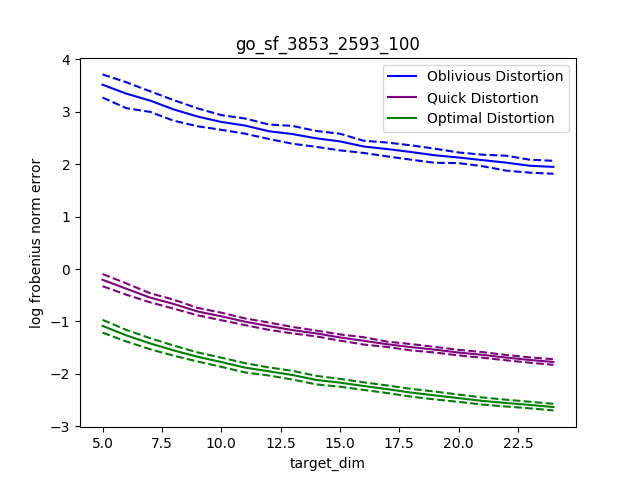} &
  \includegraphics[width=.4\linewidth]{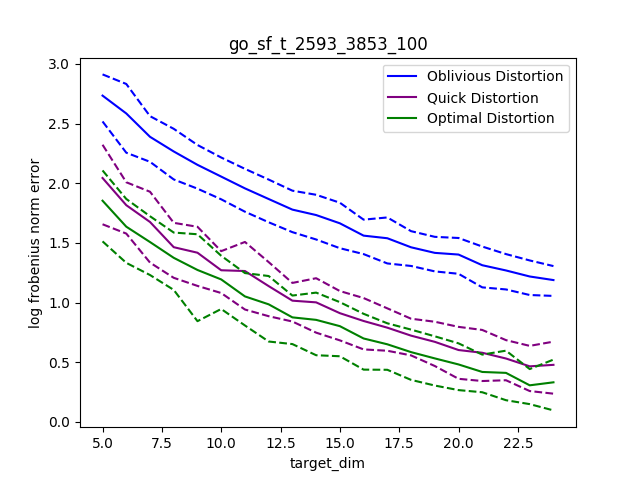} \\
  \includegraphics[width=.4\linewidth]{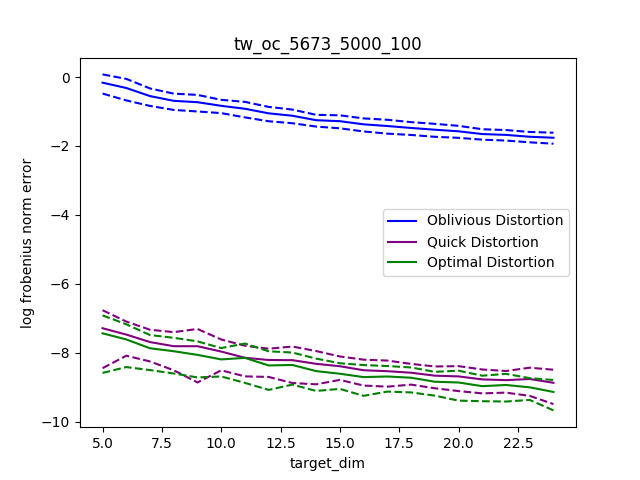}&
  \includegraphics[width=.4\linewidth]{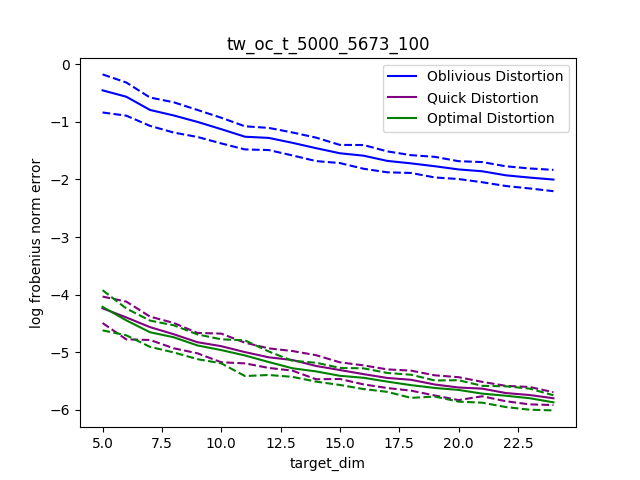}\\
  \end{tabular}  \caption{Sparse Data FMM. Top left is {\bf Go\_SF-data}. Top right is {\bf Go\_SF-feature}. Bottom left is {\bf TW\_OC-data}. Bottom right is {\bf TW\_OC-feature}. X-axis is the target dimension, Y-axis is the log MSE, dotted line represent lower and upper confidence bounds according to a single standard deviation.}
  \label{fig:fmm_sparse}
\end{figure}

\subsection{Regression}

\subsubsection{Linear Regression}\label{sec:reg_linear}
We used two data sets from the UCI Machine Learning Repository \cite{uci-e2006, uci-slice}. {\bf Slice Localization} was retrieved from a set of 53500 CT images from 74 different patients. The feature vector consists of two histograms in polar space describing the bone structure and the air inclusions inside the body; it has 54500 samples with 384 (dense) features. The {\bf E2006} dataset consists of 10K reports from thousands of publicly traded U.S. companies, published in 1996–2006 and stock return volatility measurements in the twelve-month period before and the twelve-month period after each report. The data is encoded using term frequency-inverse document frequency, resulting in a sparse dataset with 16087 samples and 150630 features. 

In our experiment we apply the projection with a preprocessing step matching a scalar $\lambda$, as described above, to the datasets. Once the data is projected we solve the regression problem on the low dimensional space.  Table~\ref{tab:reg_linear} shows the mean square error (MSE) $\pm$ one standard deviation for with different values of $\lambda$ and projection target dimension $k$. For the {\bf Slice Localization} dataset we observe near optimal performance at $\lambda = -.75$, giving significantly better results than the classic random projection corresponding to $\lambda=0$, cutting the MSE by a factor of more than 2. Even compared to $\lambda=-0.5$ corresponding to the strategy of normalizing the data before applying the RP, for some values of $k$ there is a statistically significant advantage for the best $\lambda$ value, see e.g.\ $k=25$.

For {\bf E2006} the optimal value for $\lambda$ is interestingly around $1.75$, giving empirical justification to setting $\lambda$ as a (possibly trainable) parameter rather than fixing it as a constant. For negative values of $\lambda$ the results were worse than those of positive $\lambda$ values and are not reported. The improvement over RPs ($\lambda=0$) is mild compared to the {\bf Slice} datasets, giving e.g.\ 1.3\% reduction in the MSE for $k=250$ as opposed to cutting it by half. Nevertheless, considering the standard deviation the improvement remains statistically significant.


\begin{table}[ht]
    \centering
    \tiny
    \begin{tabular}{|c|c|c|c|c|c|c|c|c|c|}
        \hline 
         \multicolumn{10}{|c|}{\bf{E2006}}        \\
        \hline
        $\lambda$ & 0.0 & 0.25 & 0.5 & 0.75 & 1.0 & 1.25 & 1.5 & 1.75 & 2.0\\
\hline
\hline
$k=50$ & 266.5  & 265.4  & 265.5  & 265.5  & {\bf 265.3 } & 265.5  & 265.5  & 265.6  & 265.5 \\
 & $\pm$ 1.0 & $\pm$ 0.3 & $\pm$ 0.3 & $\pm$ 0.3 & {\bf $\pm$ 0.4 } & $\pm$ 0.2 & $\pm$ 0.3 & $\pm$ 0.3 & $\pm$ 0.3\\
\hline
$k=100$ & 265.7  & 265.1  & 265.0  & 264.6  & 264.6  & 264.7  & {\bf 264.6 } & 264.9  & 264.7 \\
 & $\pm$ 0.5 & $\pm$ 0.4 & $\pm$ 0.4 & $\pm$ 0.4 & $\pm$ 0.6 & $\pm$ 0.4 & {\bf $\pm$ 0.6 } & $\pm$ 0.6 & $\pm$ 0.6\\
\hline
$k=150$ & 265.4  & 264.6  & 264.0  & 263.8  & 264.1  & 263.7  & 263.4  & 263.5  & {\bf 263.1 }\\
 & $\pm$ 0.6 & $\pm$ 0.6 & $\pm$ 0.5 & $\pm$ 0.6 & $\pm$ 0.6 & $\pm$ 0.5 & $\pm$ 0.6 & $\pm$ 0.7 & {\bf $\pm$ 0.5 }\\
\hline
$k=200$ & 265.3  & 264.4  & 263.7  & 263.3  & 263.2  & 263.1  & 262.9  & 262.7  & {\bf 262.6 }\\
 & $\pm$ 0.7 & $\pm$ 0.6 & $\pm$ 0.7 & $\pm$ 0.5 & $\pm$ 0.7 & $\pm$ 0.6 & $\pm$ 0.6 & $\pm$ 0.7 & {\bf $\pm$ 0.5 }\\
\hline
$k=250$ & 265.2  & 264.0  & 263.7  & 262.8  & 262.6  & 262.4  & 262.3  & {\bf 261.8 } & 261.9 \\
 & $\pm$ 0.7 & $\pm$ 0.5 & $\pm$ 0.9 & $\pm$ 0.7 & $\pm$ 0.7 & $\pm$ 0.6 & $\pm$ 0.6 & {\bf $\pm$ 0.5 } & $\pm$ 0.5\\
\hline
\hline
\multicolumn{10}{|c|}{\bf{Slice ($\times 10^5$)}}        \\
        \hline
$\lambda$ & -2.0 & -1.75 & -1.5 & -1.25 & -1.0 & -0.75 & -0.5 & -0.25 & 0.0\\
\hline
\hline
$k=5$ & 23.9  & 24.5  & 21.8  & 22.9  & 23.2  & {\bf 21.4 } & 22.9  & 39.8  & 66.1 \\
 & $\pm$ 1.0 & $\pm$ 0.9 & $\pm$ 1.0 & $\pm$ 1.4 & $\pm$ 1.7 & {\bf $\pm$ 1.0 } & $\pm$ 2.5 & $\pm$ 7.5 & $\pm$ 18.6\\
\hline
$k=10$ & 21.3  & 20.7  & 19.6  & 19.9  & {\bf 18.7 } & 19.2  & 18.9  & 24.8  & 45.4 \\
 & $\pm$ 1.3 & $\pm$ 1.2 & $\pm$ 1.2 & $\pm$ 1.4 & {\bf $\pm$ 2.1 } & $\pm$ 1.6 & $\pm$ 2.8 & $\pm$ 4.9 & $\pm$ 8.7\\
\hline
$k=15$ & 18.9  & 19.2  & 18.4  & 17.6  & 16.3  & 15.5  & {\bf 13.5 } & 25.1  & 38.7 \\
 & $\pm$ 0.9 & $\pm$ 0.8 & $\pm$ 0.7 & $\pm$ 1.0 & $\pm$ 1.5 & $\pm$ 1.9 & {\bf $\pm$ 1.0 } & $\pm$ 2.1 & $\pm$ 4.2\\
\hline
$k=20$ & 18.1  & 17.1  & 17.5  & 17.2  & 15.5  & 14.1  & {\bf 13.1 } & 19.5  & 33.6 \\
 & $\pm$ 0.5 & $\pm$ 0.2 & $\pm$ 0.6 & $\pm$ 0.7 & $\pm$ 1.2 & $\pm$ 1.9 & {\bf $\pm$ 1.2 } & $\pm$ 3.8 & $\pm$ 7.9\\
\hline
$k=25$ & 17.0  & 17.0  & 16.2  & 15.0  & 12.9  & {\bf 11.5 } & 14.0  & 15.5  & 29.7 \\
 & $\pm$ 0.3 & $\pm$ 0.3 & $\pm$ 1.1 & $\pm$ 0.6 & $\pm$ 0.7 & {\bf $\pm$ 0.5 } & $\pm$ 1.2 & $\pm$ 1.7 & $\pm$ 7.8\\
\hline
    \end{tabular}
    \caption{Linear Regression. Detailed in \S \ref{sec:reg_linear}. Results contain the Mean Square Error on of the trained model for different target dimensions, denoted by $k$, and $\lambda$ values, for the \textbf{E2006} and \textbf{Slice} datasets}
    \label{tab:reg_linear}
    \vspace{-0.15in}
\end{table}

  

\subsubsection{Logistic Regression}\label{sec:logistic}
Here we used two datasets with a classification task of logistic regression \cite{data-cifar, data-rcv1}. The first dataset if {\bf Cifar10}, a well known dataset used for image classification. It consists of $32 \times 32$ color images, resulting in a 3072 dimensional data space. We take 1979 samples and project them each 100 times. The second is Reuters Corpus Volume I ({\bf RCV1}), an archive of over 800,000 manually categorized newswire stories recently made available by Reuters, Ltd.\ for research purposes. In our experiments we used 20242 stories each encoded using term frequency-inverse resulting in 47236 dimensions. 

Our experiments are the same as in the linear regression case. We project each dataset to several dimensions, with several preprocessing steps corresponding to different $\lambda$ scalars. The results are given in Table~\ref{tab:reg_logistic}, where for every target dimension $k$ and $\lambda$ value we report the accuracy score $\pm$ a single standard deviation. For {\bf Cifar10} the result is not very sensitive to the value of $\lambda$ as long as it's in the range of $[-1,0]$; tuning $\lambda$ does not provide a statistically significant improvement over the oblivious RP. For {\bf RCV1} however we see a clear gain of tuning $\lambda=0.25$ providing, e.g.\ for $k=200$ a lift of $4\%$ in accuracy compared to oblivious RPs. 

As with linear regression we see that different datasets and tasks correspond to different optimal values of $\lambda$ justifying our proposal to tune it as a part of the learning process.

\begin{table}[ht]
    \centering
    \tiny
    \begin{tabular}{|c|c|c|c|c|c|c|c|c|c|c|c|c|c|}
        \hline 
        $\lambda$ & -1.5  & -1.25 & -1.  & -0.75  &  -0.5 & -0.25 & 0. &   0.25&  0.5 &  0.75&  1. &   1.25&  1.5 \\ 
        \hline 
        \hline 
         \multicolumn{14}{|c|}{\bf{Cifar10}}        \\
        \hline
        $k=50$ & 49.5  & 67.2  & 68.5  & 67.9  & 67.9  & {\bf 69.3 } & 68.2  & 64.2  & 64.3  & 63.7  & 65.2  & 64.5  & 63.6 \\
 & $\pm$ 2.5 & $\pm$ 4.4 & $\pm$ 3.9 & $\pm$ 4.6 & $\pm$ 4.0 & {\bf $\pm$ 3.3 } & $\pm$ 4.2 & $\pm$ 4.8 & $\pm$ 5.0 & $\pm$ 5.0 & $\pm$ 4.9 & $\pm$ 4.6 & $\pm$ 5.0\\
\hline
$k=100$ & 48.7  & 71.8  & 72.2  & 72.0  & 71.8  & 71.9  & {\bf 72.5 } & 68.5  & 68.6  & 68.5  & 69.5  & 68.5  & 68.8 \\
 & $\pm$ 1.1 & $\pm$ 2.9 & $\pm$ 3.2 & $\pm$ 3.4 & $\pm$ 3.4 & $\pm$ 3.2 & {\bf $\pm$ 3.2 } & $\pm$ 4.2 & $\pm$ 3.9 & $\pm$ 3.6 & $\pm$ 3.7 & $\pm$ 4.2 & $\pm$ 3.7\\
\hline
$k=150$ & 48.6  & 73.0  & 74.0  & 74.1  & 73.9  & {\bf 74.5 } & 74.5  & 71.3  & 71.1  & 71.8  & 71.7  & 71.6  & 71.1 \\
 & $\pm$ 0.9 & $\pm$ 2.5 & $\pm$ 2.9 & $\pm$ 2.7 & $\pm$ 2.9 & {\bf $\pm$ 2.4 } & $\pm$ 2.9 & $\pm$ 3.5 & $\pm$ 3.7 & $\pm$ 3.4 & $\pm$ 3.4 & $\pm$ 3.3 & $\pm$ 3.3\\
\hline
$k=200$ & 48.4  & 73.8  & 75.3  & 75.9  & 75.2  & 76.0  & {\bf 76.0 } & 73.3  & 73.1  & 73.5  & 73.4  & 73.8  & 73.6 \\
 & $\pm$ 0.4 & $\pm$ 2.2 & $\pm$ 2.4 & $\pm$ 2.2 & $\pm$ 2.3 & $\pm$ 2.5 & {\bf $\pm$ 2.6 } & $\pm$ 2.8 & $\pm$ 3.0 & $\pm$ 3.1 & $\pm$ 3.0 & $\pm$ 2.8 & $\pm$ 3.4\\
\hline
$k=250$ & 48.4  & 74.4  & {\bf 76.7 } & 76.6  & 76.4  & 76.3  & 76.6  & 74.3  & 74.8  & 74.8  & 74.9  & 74.9  & 75.4 \\
 & $\pm$ 0.4 & $\pm$ 1.8 & {\bf $\pm$ 2.5 } & $\pm$ 2.3 & $\pm$ 2.3 & $\pm$ 2.1 & $\pm$ 2.6 & $\pm$ 2.8 & $\pm$ 3.0 & $\pm$ 3.2 & $\pm$ 2.8 & $\pm$ 2.8 & $\pm$ 2.4\\
\hline
         \multicolumn{14}{|c|}{\bf{RCV1}}        \\
        \hline
$k=50$ & 50.2  & 50.4  & 49.9  & 56.6  & 56.0  & 55.4  & 56.7  & 57.9  & {\bf 58.6 } & 56.9  & 56.9  & 56.9  & 56.9 \\
 & $\pm$ 1.5 & $\pm$ 1.6 & $\pm$ 1.6 & $\pm$ 0.7 & $\pm$ 0.9 & $\pm$ 1.3 & $\pm$ 1.8 & $\pm$ 2.0 & {\bf $\pm$ 1.6 } & $\pm$ 0.0 & $\pm$ 0.1 & $\pm$ 0.1 & $\pm$ 0.0\\
\hline
$k=100$ & 49.8  & 50.0  & 50.2  & 57.1  & 56.6  & 57.2  & 58.2  & {\bf 61.7 } & 60.9  & 56.9  & 56.9  & 56.9  & 56.9 \\
 & $\pm$ 1.0 & $\pm$ 1.4 & $\pm$ 1.3 & $\pm$ 0.8 & $\pm$ 0.8 & $\pm$ 1.3 & $\pm$ 2.0 & {\bf $\pm$ 2.7 } & $\pm$ 2.5 & $\pm$ 0.0 & $\pm$ 0.1 & $\pm$ 0.1 & $\pm$ 0.1\\
\hline
$k=150$ & 50.3  & 50.1  & 50.8  & 56.5  & 56.2  & 58.4  & 61.2  & {\bf 62.6 } & 60.8  & 56.9  & 56.9  & 56.9  & 56.9 \\
 & $\pm$ 1.4 & $\pm$ 1.1 & $\pm$ 1.1 & $\pm$ 1.0 & $\pm$ 0.8 & $\pm$ 1.5 & $\pm$ 1.9 & {\bf $\pm$ 2.1 } & $\pm$ 2.0 & $\pm$ 0.1 & $\pm$ 0.0 & $\pm$ 0.0 & $\pm$ 0.0\\
\hline
$k=200$ & 50.6  & 51.0  & 50.5  & 56.7  & 56.7  & 59.1  & 61.2  & {\bf 65.2 } & 60.9  & 56.9  & 56.9  & 56.9  & 56.9 \\
 & $\pm$ 1.2 & $\pm$ 1.7 & $\pm$ 1.1 & $\pm$ 0.9 & $\pm$ 1.1 & $\pm$ 1.6 & $\pm$ 1.2 & {\bf $\pm$ 1.6 } & $\pm$ 2.0 & $\pm$ 0.0 & $\pm$ 0.0 & $\pm$ 0.0 & $\pm$ 0.1\\
\hline
$k=250$ & 50.2  & 50.6  & 51.2  & 56.9  & 57.4  & 59.0  & 62.3  & {\bf 65.9 } & 60.8  & 56.9  & 56.9  & 56.9  & 56.9 \\
 & $\pm$ 1.5 & $\pm$ 1.3 & $\pm$ 1.6 & $\pm$ 1.0 & $\pm$ 1.2 & $\pm$ 1.7 & $\pm$ 2.3 & {\bf $\pm$ 1.2 } & $\pm$ 1.5 & $\pm$ 0.0 & $\pm$ 0.0 & $\pm$ 0.1 & $\pm$ 0.1\\
\hline
\hline
    \end{tabular}
    \caption{Logistic Regression. Detailed in \S \ref{sec:logistic}. Results contain the accuracy on of the trained model for different target dimensions, denoted by $k$, and $\lambda$ values, for the \textbf{Cifar10} and \textbf{RCV1} datasets}
    \label{tab:reg_logistic}
        \vspace{-0.25in}

\end{table}



\vspace{-0.05in}    
\section{Future Directions}
\vspace{-0.05in}
In this paper we explore the simplest first step to looking at data dependent unbiased random projections. In doing this we restrict to linear projections, where each of the outputs are independent. An interesting idea to explore is what can we achieve if the output dimensions are dependent? Can we obtain stronger results with a non-linear pre-processing step? Can we achieve stronger results with a non-linear projection? Other than improved guarantees, the motivation for these methods come from them being applicable to the symmetric setting where both distributions are the same; a setting where our techniques fall back to the standard random projections.

\newcommand{\etalchar}[1]{$^{#1}$}

\newpage
\appendix

\section{Proofs of Section~\ref{sec:ddrp}} \label{sec:proof_ddrp}

\begin{proof} [Proof of Lemma~\ref{lem:iid_rp}]
We first observe that the estimate is indeed unbiased
\begin{align*}
    \E[\ip{Rx, Rw}] &= \E[x^\top R^\top Rw] \\
    &= x^\top\E[R^\top R]w \\
    &= x^\top w \\
    &= \ip{x,w}
\end{align*}
We move on to compute its variance. To that end, we begin with the case of $k=1$ where the dimension reduction is with a $d$ dimensional vector $r$ and denote by $r_i$ its $i$'th element. We compute the second moment explicitly; to that end we denote by $s_{2,2} = \E_{r_i,r_j} r_i^2 r_j^2$, $s_{4} = \E_{r_i } r_i^4$. For our distribution we have $s_{2,2}=1, s_{4} \leq 3$.
\begin{align*}
    \E_{r } \ip{x, r}^2 \ip{w, r}^2 &= \E_r \left( \sum_i x_i r_i \right)^2 \left( \sum_i w_i r_i \right)^2\\
 &= \E_{r} \left(\sum_i x_i r_i\right)^2 \left(\sum_i w_i r_i\right)^2 \\
&=\E_r \left(\sum_i x_i^2 r_i^2 +  \sum_{i \neq j} x_i x_j r_i r_j \right) \left(\sum_i w_i^2 r_i^2 + 2 \sum_{i < j} w_i w_j r_i r_j \right)\\
&= \E_r \left(\sum_i x_i^2 w_i^2 r_i^4 + 2 \sum_{i \neq j} x_i x_j w_i w_j r_i^2 r_j^2 + \sum_{i \neq j} x_i^2 w_j^2 r_i^2 r_j^2\right) \\
&= s_4 \sum_i x_i^2 w_i^2  + 2 s_{2,2} \sum_{i \neq j} x_i x_j w_i w_j  + s_{2,2} \sum_{i \neq j} x_i^2 w_j^2 \\
&= s_4 \sum_i x_i^2 w_i^2  + 2 s_{2,2}\left(\ip{x,w}^2 - \sum_i x_i^2 w_i^2\right) + s_{2,2} \sum_{i \neq j} x_i^2 w_j^2 \\
&= s_4 \sum_i x_i^2 w_i^2  + 2 s_{2,2}\left(\ip{x,w}^2 - \sum_i x_i^2 w_i^2\right) + s_{2,2} \left( \|x\|^2\|w\|^2 - \sum_i x_i^2 w_i^2 \right)  \\
&= \left(s_4 - 3s_{2,2}\right)  \sum_i x_i^2 w_i^2 + 2s_{2,2} \ip{x, w}^2 + s_{2,2} \|x\|^2 \|w\|^2 \\
&\leq 2\ip{x, w}^2 + \|x\|^2 \|w\|^2 \\
\end{align*}
We get that the variance of the estimate of $\ip{x,w}$ is
$$\Var \leq \ip{x,w}^2 + \|x\|\|w\|^2$$
For the case of $R$ being a matrix with target dimension $k$, since everything is i.i.d, the estimate is simply an average of $k$ independent estimates of target dimension 1, and the variance is the same as above, divided by $k$.
\end{proof}

\begin{proof} [Proof of Theorem~\ref{thm:full}]
Let $Q_X^\top Q_X = \Sigma_X, Q_W^\top Q_W = \Sigma_W$. Using the independence of $x$ and $w$ we get
\begin{align*}
& \E_{x,w}\|Ax\|^2\|(A^{\top})^{-1}w\|^2 \\ 
&= \Tr{A^{\top} A \Sigma_X}\Tr{(A^{-1} (A^{-1})^{\top}\Sigma_W}\\
&= \Tr{Q_X A^{\top} A Q_X^\top}\Tr{(Q_W A^{-1} (A^{-1})^{\top}Q_W^\top}&\\
&= \|A Q_X^\top\|^2_F \|A^{-\top} Q_W^\top\|^2_F&
\end{align*}
By Cauchy-Schwarz on the Frobenius Inner Product we get the universal lower bound:
\begin{align*}
     \Tr{ Q_X Q_W^{\top} } &= \langle A Q_X^\top, A^{-\top} Q_W^\top \rangle_F^2\\
&\leq \|A Q_X^\top\|^2_F \|A^{-\top} Q_W^\top\|^2_F 
\end{align*}
We use the Frobenius Inner Product
$\ip{X,Y}_F = \Tr{X^\top Y}$ and the Cauchy-Schwartz inequality stating that $\ip{X,Y}_F \leq \|X\|_F \|Y\|_F$. It implies that
\begin{align*}
\Tr{ Q_X Q_W^\top }^2 &= \Tr{ Q_X A^\top A^{-\top} Q_W^\top }^2\\
&= \ip{ A Q_X^\top, A^{-\top} Q_W^\top }_F^2 \\
&\leq \|A Q_X^\top\|^2_F \|A^{-\top} Q_W^\top\|^2_F \\
&= \E_{x,w}\|Ax\|^2\|A^{-\top}w\|^2 \end{align*}

Notice that this inequality happens for any factorization matrices $Q_X,Q_W$ and any invertible matrix $A$. Furthermore, the lefthand size is independent of $A$ and the righthand side is independent of $Q_X,Q_W$. It follows that if we find a specific triplet $Q_X,Q_W,A$ such that
\[ 
\ip{ A Q_X^\top, A^{-\top} Q_W^\top }_F^2 = \|A Q_X^\top\|^2_F \|A^{-\top} Q_W^\top\|^2_F
\]
we get that $Q_X,Q_W$ maximize the left expression and $A$ minimizes the right expression. Now, for two matrices $X,Y$ it holds that $\ip{X,Y}_F =  \|X\|_F \|Y\|_F$ only if $X=\lambda Y$ for some scalar $\lambda$. It follows that w.l.o.g.\ our matrix $A$ is such that 
$$ A Q_X^\top = A^{-\top} Q_W^\top$$
or conversely
$$ A^{\top}A Q_X^\top = Q_W^\top $$
Such a matrix $A$ can be found via CCA
\begin{lemma}[CCA]
Given two positive definite matrices $\Sigma_X, \Sigma_W$, we can pick matrix $A$ such that 
\[A  \Sigma_X A^\top =  A^{-\top} \Sigma_W A^{-1}  \]
If we decompose $\Sigma_X = Q_X^\top Q_X$, $\Sigma_W = Q_W^\top Q_W$, $Q_X Q_W^\top = U_{XW} D_{XW} V_{XW}^\top$, then we can set
\[ A = D_{XW}^{1/2} U_{XW}^\top Q_X^{-\top} \]
\[ A^{-\top} = D_{XW}^{1/2}V_{XW}^\top Q_W^{-\top} \]
Furthermore this choice of $A$ is independent of the decomposition of $\Sigma_X$ or $\Sigma_W$
\end{lemma}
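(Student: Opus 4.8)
The plan is to verify the three assertions of the lemma by direct substitution, using only that $\Sigma_X,\Sigma_W$ are positive definite (so the square factors $Q_X,Q_W$ are invertible), together with the defining relations $Q_X^\top Q_X=\Sigma_X$, $Q_W^\top Q_W=\Sigma_W$, $Q_XQ_W^\top=U_{XW}D_{XW}V_{XW}^\top$, and the orthogonality $U_{XW}^\top U_{XW}=V_{XW}^\top V_{XW}=I$.

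First I would record the inverses. Since $Q_XQ_W^\top$ is invertible, all its singular values are strictly positive, so $D_{XW}^{1/2}$ is invertible and hence so is $A=D_{XW}^{1/2}U_{XW}^\top Q_X^{-\top}$, with $A^{-1}=Q_X^\top U_{XW}D_{XW}^{-1/2}$ and thus $A^{-\top}=D_{XW}^{-1/2}U_{XW}^\top Q_X$. To see that this agrees with the second displayed formula $A^{-\top}=D_{XW}^{1/2}V_{XW}^\top Q_W^{-\top}$, I would multiply the claimed identity on the left by $D_{XW}^{1/2}$ and on the right by $Q_W^\top$; it then reduces to $U_{XW}^\top(Q_XQ_W^\top)=D_{XW}V_{XW}^\top$, which is immediate after substituting the SVD and cancelling $U_{XW}^\top U_{XW}$.

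Next I would check the identity $A\Sigma_X A^\top=A^{-\top}\Sigma_W A^{-1}$. Substituting $\Sigma_X=Q_X^\top Q_X$ and cancelling $Q_X^{-\top}Q_X^\top=I=Q_XQ_X^{-1}$, the left-hand product telescopes to $A\Sigma_X A^\top=D_{XW}^{1/2}U_{XW}^\top U_{XW}D_{XW}^{1/2}=D_{XW}$. Running the same computation with the second formula for $A^{-\top}$ (and the corresponding $A^{-1}=Q_W^{-1}V_{XW}D_{XW}^{1/2}$) gives $A^{-\top}\Sigma_W A^{-1}=D_{XW}^{1/2}V_{XW}^\top V_{XW}D_{XW}^{1/2}=D_{XW}$ as well, so both sides equal the diagonal matrix $D_{XW}$ of canonical correlations and the identity holds.

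Finally, for independence of the factorization, I would use the standard fact that any two square factorizations of the same positive definite matrix differ by an orthogonal factor, so an alternative choice has the form $\tilde Q_X=O_XQ_X$, $\tilde Q_W=O_WQ_W$ with $O_X,O_W$ orthogonal. Then $\tilde Q_X\tilde Q_W^\top=O_X(Q_XQ_W^\top)O_W^\top=(O_XU_{XW})D_{XW}(O_WV_{XW})^\top$ is an SVD with unchanged singular values, and feeding it into the construction yields $\tilde A=D_{XW}^{1/2}(O_XU_{XW})^\top(O_XQ_X)^{-\top}=D_{XW}^{1/2}U_{XW}^\top Q_X^{-\top}=A$. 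The only care needed — and it is minor — is the bookkeeping of the transpose and inverse operations, plus noting that when $D_{XW}$ has repeated entries the SVD is not unique, so the independence statement should be read for a consistent choice of SVD; no genuine obstacle arises.
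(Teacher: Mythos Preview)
Your verification is correct. The paper itself does not supply a proof of this lemma; it is simply stated inside the proof of Theorem~\ref{thm:full} as a standard CCA fact and then invoked. Your direct substitution argument---computing $A^{-1}$ and $A^{-\top}$ explicitly, reducing the second displayed formula to $U_{XW}^\top(Q_XQ_W^\top)=D_{XW}V_{XW}^\top$, showing both $A\Sigma_X A^\top$ and $A^{-\top}\Sigma_W A^{-1}$ collapse to $D_{XW}$, and handling the factorization ambiguity via an orthogonal left factor---is clean and complete. Your caveat about repeated singular values is apt: the independence claim should indeed be read modulo the usual SVD non-uniqueness, and the paper is silent on this point.
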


We choose $Q_X$ arbitrarily and set $Q_W$ in a way that $Q_XQ_W^\top$ is symmetric and positive definite. This can be done by choosing an arbitrary $\hat{Q}_W$, decomposing $Q_X\hat{Q}_W^\top = \hat{U}\hat{D}\hat{V}^\top $ and setting $Q_W = \hat{U} \hat{V}^\top \hat{Q}_W$. Since $\hat{V},\hat{U}$ are orthogonal matrices we still have $Q_W^\top Q_W = \Sigma_W$. We also get that
$$ Q_XQ_W^\top = Q_X\hat{Q}_W^\top \hat{V}\hat{U}^\top =  \hat{U}\hat{D}\hat{V}^\top \hat{V} \hat{U}^\top = \hat{U}\hat{D}\hat{U}^\top$$

\noindent Using the terms above we can plug in the equation for $A^*$, the optimizer of the CCA problem, and obtain
\begin{align*}(A^*)^\top A^* Q_X^\top &= Q_W^\top V_{XW}^{-\top} D_{XW}^{-1/2} D_{XW}^{1/2} U_{XW}^\top  Q_X^{-\top} Q_X^\top\\ & = Q_W^\top V_{XW}^{-\top} U_{XW}^\top  \\
\end{align*}
Since we chose $Q_X,Q_W$ in a way that $Q_XQ_W^\top$ is psd we get that $V_{XW}=U_{XW}=\hat{U}$ and 
$$(A^*)^\top A^* Q_X^\top =Q_W^\top $$
as required.

Now that we obtained the minimizer $A^*$ we can compute the value of the expression.  $\Tr{Q_XQ_W^\top}$ equals the sum of the eigenvalues of the matrix which are the same as its singular values since it is psd. That is, the value of $\Phi(A^*)$ is the sum of the elements of $D_{XW}$. Notice that these elements do not depend on our choice of $Q_X,Q_W$ as that choice only affects the rotations $V_{XW}, U_{XW}$. With that in mind we can compute the values of $D_{XW}$ by considering the decomposition $Q_X =  D_X^{1/2} U_X^\top$, $Q_W =  D_W^{1/2} U_W^\top$, where $\Sigma_X = U_X D_X U_X^\top, \ \ \Sigma_W = U_W D_W U_W^\top$ are the singular value decompositions of $\Sigma_X,\Sigma_W$, we get that these values are exactly those obtained by multiplying the square roots of the eigenvalues of $\Sigma_X, \Sigma_W$ as required.

\end{proof}

\begin{proof} [Proof of Theorem~\ref{thm:fast}]
In what follows we use the fact that the Frobenius inner product of $M$ with a diagonal matrix only depends on the diagonal entries of $M$. That is, if $X$ is diagonal then for any matrix $M$ we have $\Tr{XM}=\Tr{X\diag{M}}$ where $\diag{M}$ is the matrix that is equal to $M$ on the diagonal and zero elsewhere. Using the fact the $A$ is diagonal we get
\begin{align*}
& \|A Q_X^\top \|_F^2 \|A^{-\top} Q_W^\top \|_F^2\\
&= \Tr{A^\top A \Sigma_X} \Tr{A^{-1} A^{-\top} \Sigma_W} \\ 
&= \Tr{A^\top A \diag{\Sigma_X}} \Tr{A^{-1} A^{-\top} \diag{\Sigma_W}}  
\end{align*}
This simple calculation shows us that restricting to diagonal pre-processing is equivalent to throwing away all of the off-diagonal information in our covariance matrices, and proceeding with the results of Theorem~\ref{thm:full}. The claim trivialy follows.

\end{proof}

\section{Additional Experiments} \label{app:more experiments}

\subsection{FMM synthetic data}\label{sec:synth}

For synthetic matrices we define a few distributions over matrices. In the first, called {\bf diag}, we first sample a $d$ dimensional vector of i.i.d Laplace variables. This determines a diagonal $d \times d$ covariance matrix. Now we sample $n$ i.i.d.\ rows to construct the matrix. The second distribution, called {\bf uniform}, we sample $d$ i.i.d.\ uniform variables in $[0,1]$ as the eigenvalues of the covariance and a random rotation for the eigenvectors. With the covariance matrix ready we sample $n$ i.i.d.\ rows. The third type called {\bf unifskew} is obtained by averaging two independent matrices one drawn from {\bf diag} and one from {\bf uniform}.
The pairs of synthetic matrices we consider are {\bf diag-diag}, {\bf uniform-diag}, {\bf uniform-unifskew}, {\bf uniform-uniform}. Every pair of matrices $X,W$ consists of two independently drawn matrices from the mentioned distribution.

For each of the $X,W$ pairs we compute the exact product $XW^\top$, and 3 approximate matrix products $(XR)(WR)^\top$ corresponding to three random projections $R$. The first, {\bf oblivious} is an oblivious random projection with i.i.d.\ signs. The second {\bf quick}, contains a pre-processing component based only on the diagonal entries of the covariance of $X$ and $W$.   The third, {\bf optimal}, contains the optimal pre-processing component based on CCA. 
For each approximate matrix product we compute the Squared Error, namely $\|XW^\top - (XR)(WR)^\top\|_F^2$ for multiple values of target dimension. We repeated the experiment 100 times to account for the noise coming from the randomness of $R$.

For each of the three distributions detailed above, \textbf{diag}, \textbf{uniform}, \textbf{unifskew}, we draw 1000 samples from $\mathbb{R}^{100}$. We then form a matrices using these as columns, and look at the squared error over 100 random projections. The first experiment we conduct is looking at the mean squared error for two matrices, both with columns drawn from the \textbf{diag} distribution. For this experiment quick and optimal are equivalent (since the covariance matrix is diagonal) and nets around a 3x decrease in MSE, regardless of the target dimension. When comparing \textbf{diag} against \textbf{unif}, we use the same methodology. Here we see quick yields approximately a 1.5x decrease in MSE over oblivious, and optimal yields another 1.5x decrease in MSE from quick. When comparing \textbf{unif} against \textbf{unifskew} we get the same decreases as the previous experiement across all target dimensions. Finally we compare \textbf{unif} against \textbf{unif}. Here the quick projections yield the same MSE as oblivious projections, while optimal projections yield a 2x decrease in MSE.

These results demonstrate what we expect: when drawing from distributions whose covariance are mostly concentrated on the diagonal, quick random projections perform almost as well as optimal while requiring significantly less preprocessing time. In figure \ref{fig:fmm_synthetic} we plot the results of these experiments by plotting the mean squared error against different target dimensions. For each plot we also include dashed lines to show the standard deviation among the 100 random projections.

\begin{figure}[ht]
\begin{tabular}{cc}
  \includegraphics[width=.4\linewidth]{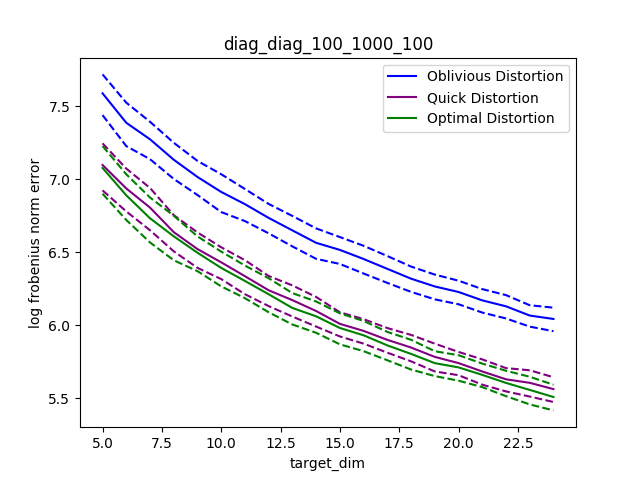} & 
  \includegraphics[width=.4\linewidth]{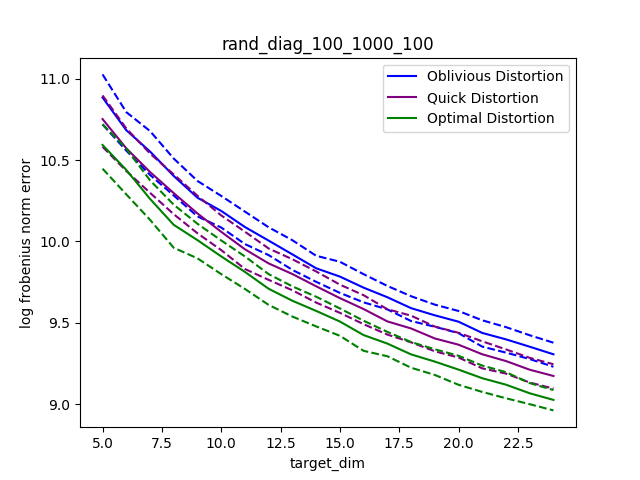} \\
  \includegraphics[width=.4\linewidth]{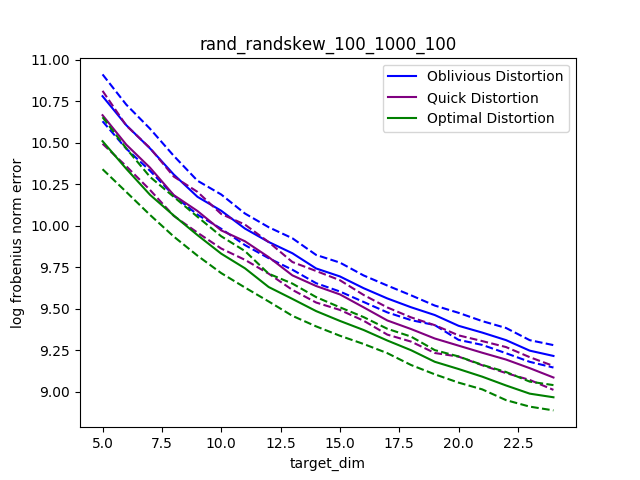}&
  \includegraphics[width=.4\linewidth]{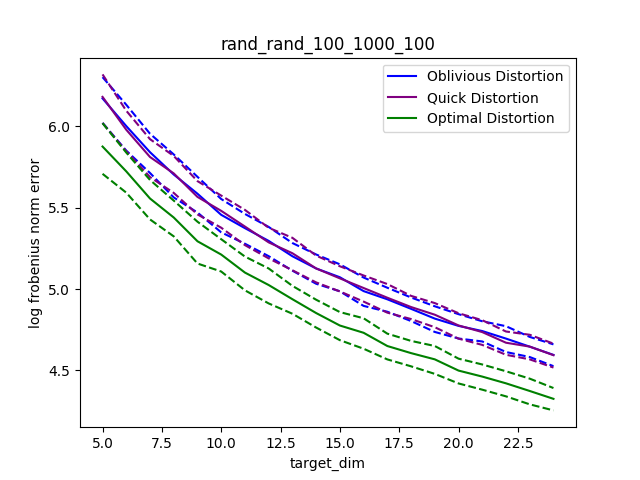}\\
  \end{tabular}  \caption{Synthetic Data FMM. Detailed in \S \ref{sec:synth}. Top left compares \textbf{diag} to \textbf{diag}. Top right compares \textbf{unif} to \textbf{diag}. Bottom left compares \textbf{unif} to \textbf{unifskew}. Bottom right compares \textbf{unif} to \textbf{unif}.}
  \label{fig:fmm_synthetic}
\end{figure}

\end{document}